\newcommand{\R}{\mathbb{R}}
\newcommand{\E}{\mathbb{E}}
\newcommand{\OO}[1]{\operatorname{O}\left(#1\right)}
\newtheorem{theorem}{Theorem}
\newtheorem{conjecture*}{Conjecture}
\title{Function Driven Diffusion for Personalized\\ Counterfactual Inference}
\author{Alexander Cloninger$^{1,2}$\thanks{Email: alexander.cloninger@yale.edu}}
\affil{Applied Mathematics Program$^1$ and Center for Outcomes Research \& Evaluation$^2$}
\affil{Yale University, New Haven, CT}
\renewcommand\footnotemark{}
\date{}                                           
\begin{document}
	\maketitle

\begin{abstract}
We consider the problem of constructing diffusion operators high dimensional data $X$ to address counterfactual functions $F$, such as individualized treatment effectiveness.  We propose and construct a new diffusion metric $K_F$ that captures both the local geometry of $X$ and the directions of variance of $F$.  The resulting diffusion metric is then used to define a localized filtration of $F$ and answer counterfactual questions pointwise, particularly in situations such as drug trials where an individual patient's outcomes cannot be studied long term both taking and not taking a medication.  We validate the model on synthetic and real world clinical trials, and create individualized notions of benefit from treatment.
\end{abstract}

 \section{Introduction}
 
 We address the problem of building a metric on a high dimensional data set $X=\{x_i\}\subset \R^d$ that is smooth with respect to an external nonlinear function $F$.  They types of functions we consider arise from \emph{counterfactual questions}, such as ``would this patient benefit or be hurt from medication, given their history and baseline health?''.  In medical studies, these functions adopt the interesting feature that they cannot be evaluated pointwise, since an individual patient's outcomes cannot be studied long term both taking and not taking a medication.
 
 
 \subsection{Mathematical Approach}

 In the case of an $A/B$ treatment study, we denote the risk on treatment $A$ as $Y_A(x)$ and treatment $B$ as $Y_B(x)$.  The quantity $Y_A(x) - Y_B(x)$ is known as the individual treatment effectiveness.  The treatment group is denoted $T_X = 1$ if $x$ is in treatment $B$, $0$ otherwise.  A current method of dealing with treatment effectiveness is the \emph{Cox proportional hazard model} from \cite{coxModel}.  In it, we let $\lambda_0(t)$ be the common baseline hazard function which describes the risk of an outcome at each time step independent of treatment.  Within treatment groups, the hazard function for the Cox proportional hazard model takes the form
\begin{eqnarray*}
	\lambda(t) = \lambda_0(t) e^{T_X \alpha + \vec{\beta} X},
\end{eqnarray*}
where $T_X\in\{A,B\}$ is an indicator function for which treatment patient $x$ was in.  This makes the associated survival distribution
\begin{eqnarray*}
	P(W\ge w) = exp\left[-e^{X_\alpha Y_1(X) + (1-X_\alpha) Y_0(X)}\int_0^w \lambda_0(t) dt \right].
\end{eqnarray*}

Also assume there is a random censoring model, which means that people leave the trial at random times throughout the process.  This means we don't observe the true outcome time $W$ of each patient, but instead we observe the leave time $t = \min(W,C)$, where $C$ and $W$ are independent and $C$ represents time to censorship.  The indicator function $D$ of whether $W\le C$ is known, as well.

A patient personalized version of this model would be
\begin{eqnarray*}
	\lambda(t|x) = \lambda_0(t|x) e^{(1-T_X) Y_A(X) + T_X Y_B(X)},
\end{eqnarray*}
which now allows the benefit or detriment of the drug to be patient specific.

In this model of personalized risk, $Y_A$ and $Y_B$ are unknowable pointwise in a drug trial since each $x$ only takes one of the drugs in $\{A,B\}$.  So we estimate $Y_B(X) - Y_A(X)$ in a neighborhood by assuming that locally, patients satisfy a proportional hazard model, with
\begin{eqnarray*}
	\lambda_0(t|z) \approx \lambda_0(t|x), \textnormal{ } Y_A(z) \approx Y_A(x),\textnormal{ } Y_B(z) \approx Y_B(x),  
	\textnormal{ for } z\in\mathscr{N}(x)= \{z\in\R^m : \rho(x,z)<\epsilon\},
\end{eqnarray*}
for some metric $\rho$, which we discuss further in Section \ref{weightedTree}.  Thus we can assume that everyone in the neighborhood shares a common baseline risk, and the relative benefit of treatment is a constant multiple of that risk.  This allows us to run a cox proportional hazard model on $z\in\mathscr{N}(x)$ by fitting $\alpha$ to
\begin{eqnarray*}
	\lambda(t|z) = \lambda_0(t) e^{\alpha T_Z}, & z\in\mathscr{N}(x),\\
	F(x) = \alpha. & 
\end{eqnarray*}
Estimate of $\alpha$ for each neighborhood can be done in several ways.  If we only observe $D$ (i.e. whether or not the patient had an outcome before leaving the trial), then $\alpha$ is estimated through method of moments between the two treatment groups.  If we observe the actual outcome time $t$ along with $D$, $\alpha$ is estimated through partial likelihood maximization.  As a note, while partial likelihood maximization uses more information and thus should result in a better estimate, convergence guarantees are more difficult to derive.  We present certain guarantees for both approaches in Section \ref{locHR}.

This means $F(x)$ reflects the amount a patient is positively or negatively affected by a drug, and can be used to approximate $Y_B(X) - Y_A(X)$. The problem turns into a metric discovery problem of 
determining a metric $\rho$ that learns the level sets of $F$. This is akin to finding pockets of people, based only on baseline information $X$, that are at much higher risk (or lower risk) on drug $A$ than they are on drug $B$.  Discovery of the metric $\rho(x,y)$ then allows for an analysis of ``types'' of responders and non-responders.

 We view these types of functions $F:X\rightarrow \R$ as functions that can only be evaluated on large subsets of the data.  In other words, $F(E)$ for $E\subset X$ is only computable when $|E|\ge c>0$.  We build an algorithm which constructs a metric $\rho:X\times X\rightarrow \R^+$ such that
\begin{eqnarray*}
	| F(E) - F(E') | < C \rho(E,E'),
\end{eqnarray*}
for a small constant $C$, where $\rho(E,E') = \max \{\rho(x,y) : x\in E, y \in E'\}$.  
In other words, the metric $\rho$ does not only consider the geometry of the space $X$, but also the geometry and the properties of the function $F$ being studied.

 The purpose of computing $\rho$ is two-fold:
 \begin{enumerate}
 \item this discovers the intrinsic organization of $X$ which dictates changes in $F$.  This makes any subsequent clustering or analysis done using $\rho$ reflect the level sets of $F$, as well as the intrinsic structure of $X$.  The reason for doing this is that $F$ may not be smooth with respect to the intrinsic geometry of the space, but has structure that is well described by a subset of the features.  Also,
 \item this allows for simple estimation of $F$ at a finer scale than it is reliable naively.  Using $\rho$, we are able to construct an estimate of $F$, which we call $\widehat{f}$, which can be evaluated pointwise via
 \begin{eqnarray*}\label{eq:limitApproxF}
\widehat{f}(x) = \lim\limits_{\epsilon\rightarrow 0} F(\mathscr{N}_d^\epsilon(x)), & \mathscr{N}_\rho^\epsilon(x) = \{z\in\R^m : \rho(x,z)<\epsilon\}.
\end{eqnarray*}
One can also define a multi-scale decomposition of $F$ via 
\begin{eqnarray*}\label{eq:waveletApproxF}
\widehat{f}(x) = \sum_{\epsilon_i} f_{\epsilon_i}(x), & f_\epsilon(x) = F(\mathscr{N}_d^{\epsilon/2}(x)) - F(\mathscr{N}_d^{\epsilon}(x)).
\end{eqnarray*}
The key in both these approximations is that, provided $F$ is smooth with respect to $\rho$, the approximating $\epsilon-$neighborhood will have a large radius about level sets of $F$.  This increases the number of points $\{x_i\}$ in $\mathscr{N}_\rho^\epsilon(x)$ for a fixed $\epsilon$, making the approximations more accurate than those generated by taking an isotropic ball of radius $\epsilon$ about $x$.
 \end{enumerate}

\subsection{Main Contributions}

The study of individualized treatment effects has recently been considered with linear lasso models of \cite{qian2011performance}, and linear logistic models with AdaBoost of \cite{kang2014combining}.  A number of models have been built to predict outcomes from a single treatment, but high risk for an outcome does not necessarily imply treatment benefit, as seen in  \cite{janes2011measuring,janes2014approach}.  While these models provide useful treatment recommendations, they project to a one dimensional function space and interpretability is limited to the non-zero coefficients of the model.  

While we are interested in determining a treatment recommendation, we are also interested in the question of characterizing the level sets of a treatment effect.  Diffusion embeddings provide a non-linear framework to map out the data into a continuum of varying treatment effectiveness.  Using a diffusion metric, one can determine variability of types of patients that similarly benefit from treatment (or lack of treatment).

Function regularized diffusion has been considered when $F$ can be evaluated pointwise by \cite{szlam2008}.  
We have also previously examined building non-linear features of functions $F$ that cannot be evaluated pointwise and subsequently organize these features in \cite{cloninger2015}, as well as regression of non-linear Cox proportional hazard functions in  \cite{jared2016}.


The main contributions of this work are:
\begin{itemize}
\item the ability to build a function regularized diffusion metric without the ability to evaluate $F$ pointwise,
\item interpolation of a function regularized diffusion metric to new points where $F$ is unknown, and
\item the use of function regularized diffusion to define pockets of responders and non-responders to a given treatment.
\end{itemize}

 This paper is organized as follows.  Section \ref{background} gives background descriptions of the tools we reference throughout the paper, including diffusion maps, and hierarchical cluster treesSection \ref{weightedTree} details the function weighted trees used to generate $\rho$, as well as the introduces the notion of estimating a data point's personalized  function estimate.  Section \ref{locHR} discusses the guarantees that can be given for personalized treatment effect, as well as convergence rates.  Section \ref{applications} applies and validates our algorithm on several datasets of synthetic patients, and discovers the original ground truth metric.  We also examine the algorithm on real world patient data and examine validation schemes.

 \section{Background}\label{background}
 
 In this section, we discuss previous research that considers organization of points.  This considers $M\in \R^{n\times m}$ as a data matrix of $n$ points and $m$ features per point.  Denote the rows of $M$ by $X$ (the set of points), and the set of columns by $Y$ (the set of features or questions).  For this section, there is no external function $f$ being considered.
  
 \subsection{Diffusion Geometry}
 Diffusion maps is a manifold learning technique based on solving the heat equation on a data graph, as in \cite{coifman2006}.  It has been used successfully in a number of signal processing, machine learning, and data organization applications.  We will briefly review the diffusion maps construction.
 
 Let $X = \{x_1, ..., x_n\}$ be a high dimensional dataset with $x_i \in \R^m$.  A data graph is constructed with each point $x_i$ as a node and edges between two nodes with weights $k(x_i, x_j)$.  The affinity matrix $K_{i,j} = k(x_i, x_j)$ is required to be symmetric and non-negative.  Common choices of kernel are the gaussian
 \begin{eqnarray*}
 k(x_i, x_j)  = e^{-\frac{\|x_i - x_j\|^2_2}{2\sigma^2}},
 \end{eqnarray*}
 or positive correlation
 \begin{eqnarray*}
 k(x_i, x_j) = \max\left( \frac{ \langle x_i, x_j \rangle}{\|x_i\|\|x_j\|} , 0\right).
 \end{eqnarray*}
$K$ can be computed using only nearest neighbors of $x_i$ such that $k(x_i, x_j) \ge \tau > 0$.
 
 Let $D_{i,i} = \sum_j k(x_i,x_j)$.  We normalize kernel $K$ to create a Markov chain probability transition matrix 
 \begin{eqnarray*}
 P = D^{-1} K.
 \end{eqnarray*}
 The eigendecomposition of $P$ yields a sequence of eigenpairs $\{(\lambda_i, \phi_i)\}_{i=0}^{n-1}$ such that $1=\lambda_0 \ge \lambda_1 \ge 	...$
 
 The diffusion distance $d^t_{DM}(x_i , x_j)$ measures the distance between two points as the probability of points transitioning to a common neighborhood in some time $t$.  This gives
 \begin{eqnarray*}
 d^t_{DM}(x_i, x_j) = \sum\limits_{x_k\in X} \left(P^t(x_i, x_k) - P^t(x_j, x_k)\right)^2 = \sum_{k\ge1} \lambda_k^{2t} \left(\phi(x_i) - \phi(x_j)\right)^2.
 \end{eqnarray*}
 
 Retaining only the first $d$ eigenvectors creates an embedding $\Phi_t:X\rightarrow \R^d$ such that
 \begin{eqnarray*}
 \Phi_t : x_i \rightarrow [\lambda_1^t \phi_1(x_i) , ..., \lambda_d^t \phi_d(x_i)].
 \end{eqnarray*}

 Figure \ref{fig:twoCircles} shows a two dimensional example dataset and the data graph generated on the points.  We also see the low frequency eigenfunctions on the data graph, and the diffusion embedding $\Phi_t$.
 
 \begin{figure}[!h]
 \footnotesize
 \begin{tabular}{cc}
 \includegraphics[width=.4\textwidth]{./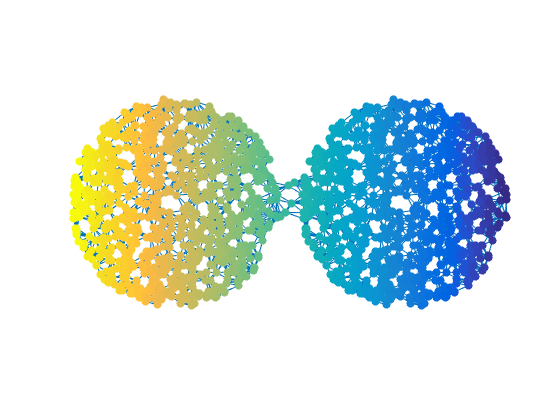} & 
 \includegraphics[width=.4\textwidth]{./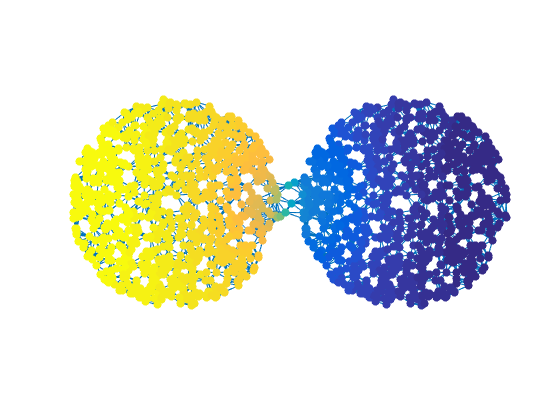} \\
 (a) Data Graph colored by x-coordinate & (b) Data graph colored by $\phi_1$\\
 \includegraphics[width=.4\textwidth]{./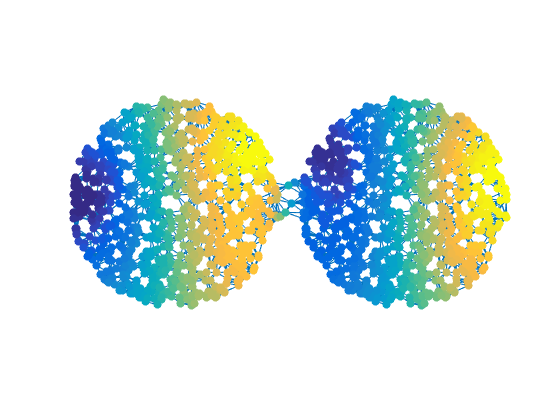} & 
 \includegraphics[width=.4\textwidth]{./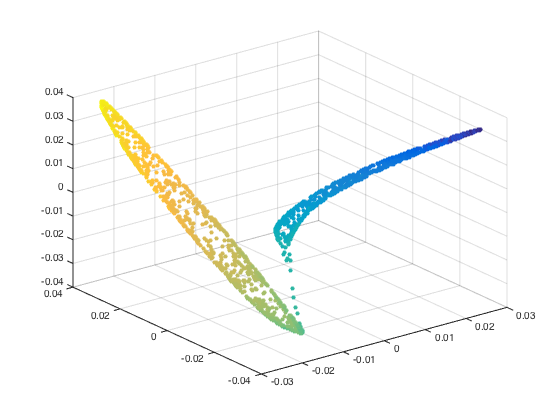} \\
 (c) Data graph colored by $\phi_2$ & (d) $\Phi_t$ colored by x-coordinate
 \end{tabular}
 \caption{Toy example to demonstrate relationship between geometry of dataset and eigenfunctions of graph.  Dataset is only 2D for ease of visualization, algorithm is equally valid on high-dimensional dataset.}\label{fig:twoCircles}
 \end{figure}
 
 {\bf Remark:} The diffusion time $t$ is a continuous variable, which can be thought of as the degree to which $\Phi_t$ is a low-pass filter.  For small $t$, more of the high-freqncy eigenfunctions are given non-trivial weight.  For large $t$, the embedding is mostly concentrated on the low-frequency eigenfunctions that vary slowly across the data.

 \subsection{Hierarchical Tree From Diffusion Distance}
  The main idea behind bigeometric organization is to construct a coupled geometry via a partition tree on both the data points and the features.   A partition tree is effectively a set of increasingly refined partitions, in which finer child partitions (i.e. lower levels of the tree) are splits of the parent folder which attempt to minimize the inter-folder variability.
 
 Let $X\subset \R^m$ be a dataset of points, and $\Phi_t : X \rightarrow \R^d$ be a diffusion embedding with corresponding diffusion distance $d_{DM}^t : X\times X \rightarrow \R^+$.  A partition tree on $X$ is a sequence of $L$ tree levels $\mathscr{X}^\ell$, $1\le \ell\le L$.  Each level $\ell$ consists of $n(\ell)$ disjoint sets $\mathscr{X}^\ell_i$ such that 
 \begin{eqnarray*}
 X = \bigcup_{i=1}^{n(\ell)}  \mathscr{X}^\ell_i.
 \end{eqnarray*}
 Also, we define subfolders (or children) of a set $\mathscr{X}^\ell_i$ to be the indices $I^{\ell+1}_i \subset\{1, ..., n(\ell+1)\}$ such that
 \begin{eqnarray*}
 \mathscr{X}^\ell_i = \bigcup_{k \in I^{\ell+1}_i} \mathscr{X}^{\ell+1}_k.
 \end{eqnarray*}
For notation, $\mathscr{X}^1 = X$ and $\mathscr{X}^L_i = \{x_i\}$.  See Figure \ref{fig:treePlot} for a visual breakdown of $X$.

 This tree can be in two ways:
 \begin{enumerate}
 \item \emph{Top-down:} Taking the embedded points $\Phi_t(X)\subset \R^d$, the initial split $\mathscr{X}^2$ divides the data into 2 (or k) clusters via k-means or some clustering algorithm.  Each subsequent folder is then split into 2 (or k) clusters in a similar way, until each folder contains a singleton point.
 
 \item \emph{Bottom-up:} Taking the embedded points $\Phi_t(X)\subset \R^d$, the bottom folders $\mathscr{X}^{L-1}$ are determined by choosing a fixed radius $\epsilon$ and covering $\Phi_t(X)$ with balls of radius $\epsilon$.  Each subsequent level of the tree is then generated as combinations of the children nodes that are ``closest'' together under the distance $d^t_{DM}$.

 \end{enumerate}
 
\begin{figure}[!h]
\begin{center}
\includegraphics[width=.6\textwidth]{./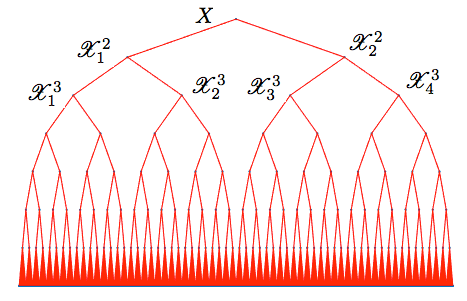}
\end{center}
\caption{Breakdown of $X$ into folders.}\label{fig:treePlot}
\end{figure}

{\bf Remark:} It is important to note that whether one chooses a top-down or bottom-up approach, the fact that the clustering occurs on the diffusion embedding $\Phi_t(X)$ makes the resulting tree, by definition, a ``bottom-up'' geometry.  This is because the embedding and diffusion distance is built off of local similarities alone, meaning that the resultant geometry and partition tree are based on properties of the underlying dataset and manifold rather than the ambient dimension and a naive Euclidean distance in $\R^m$.

\section{Weighted and Directional Trees Without Pointwise Function Evaluation}\label{weightedTree}
Let us denote our data space $X\subset \R^m$.  In its most general form, we have an external function $F:X \rightarrow \R$ which cannot be evaluated pointwise.  $F$ can only be evaluated on large subsets $E\subset X$.  We define the pointwise estimate of $F$ to be $\widehat{f}(x)$, as in \eqref{eq:limitApproxF}.  This is done by defining a type of locally weighted distance to incorporate estimates of a feature's power to discriminate $F$ in different half spaces.  The details on this method are in Section \ref{weightedEMD}.  An overview of the approach is in Algorithm \ref{mainAlgorithm}.

\begin{algorithm}[h!]
\caption{Calculate Function Weighted Metric}\label{mainAlgorithm}
\begin{tabular}{ll}
{\bf Required:} & Training points $\{x_i\}_{i=1}^n \in\R^{m}$ \\
 & Function $F$ to be evaluated on $E\subset \R^m$ \\
 
{\bf Result:} & $\Phi_t: \{x_i\}_{i=1}^n \rightarrow \R^d$ such that range$(\widehat{f}(x))$ is large, where \\
\end{tabular}
\begin{eqnarray*}
\widehat{f}&:&\{x_i\}_{i=1}^n \rightarrow \R \\
x &\mapsto& F(\mathscr{N}_{d_{F}^t}^\epsilon(x)) 
\end{eqnarray*}

\begin{enumerate}

\item Build a diffusion embedding of the points $\Phi_t(X)$ and a hierarchical tree $\mathscr{X}$


\item Build a tree $\mathscr{Y}$ that determines the local coordinate feature weights (see Section \ref{weightedEMD})

\item Build a new diffusion embedding of the points $\Phi_t(X)$ and a hierarchical tree $\mathscr{X}$ based on the kernel in \eqref{eq:weightedEMDKernel}

\item Iterate between the points and the features until embedding $\Phi_t(X)$ and tree $\mathscr{Y}$ are stable

\item Define pointwise neighborhood $\mathscr{N}_{d_{F}^t}^\epsilon(x) = \{z\in\R^m : \|\Phi_t(x) - \Phi_t(z)\|_2<\epsilon\}$ and function estimate $\widehat{f}(x)$
\end{enumerate}
\end{algorithm}


\subsection{Weighted Trees}\label{weightedEMD}

 Let $M\in \R^{n\times m}$ be the data matrix and $F$ be the integral operator of interest.  Denote the rows of $M$ by $X$ (the set of points), and the set of columns by $Y$ (the set of features or questions).  We wish to build feature weights on each folder of $\mathscr{X}$ that maximally separate $F$.  The algorithm is as follows:
  \begin{enumerate}
 
 \item Assume the tree $\mathscr{X}$ is known and separates $X$ into hierarchical nodes.  Fix a node $\mathscr{X}^\ell_i$.
 
 \item  For each element $y\in Y$, we split $y$ into $k$ intervals $[a_j, a_{j+1})$ and bin the elements of $\mathscr{X}^\ell_i = \{h_j\}_{j=1}^{k}$ such that
 \begin{eqnarray*}
 x\in h_j &\iff& x(y) \in [a_j, a_{j+1}) \textnormal{ and } x\in \mathscr{X}^\ell_i.
 \end{eqnarray*} 
 Question $y$ is then assigned a local weight for its ability to discriminate $F$ by
 \begin{eqnarray}\label{eq:1Dweight}
 w^\ell_i  (y) = \sum_{j=1}^k \frac{ |h_j|}{|\mathscr{X}^\ell_i |} \cdot |F(h_j) - \bar{F}|^2,
  \end{eqnarray}
where $\bar{F}$ is the weighted mean across all bins.  

 \item Now that every node of the tree $\mathscr{X}$ has local feature weights, we calculate the local weights at a point $x_i$ by 
 \begin{eqnarray*}
w_{x_i}(y) = \sum_\ell 2^{-\alpha\ell} w_{x_i}^\ell(y), &\textnormal{where}& w_{x_i}^\ell(y) = w_i^\ell(y) \textnormal{ for } x_i \in \mathscr{X}_i^\ell.
 \end{eqnarray*}
 These weights create a diagonal matrix $W_{x_i}$ where $W_{x_i}[y,y] = (w_{x_i}(y) + \lambda)^{-1}$ for a small positive constant $\lambda$.
  
 \item The kernel function $k:X\times X\rightarrow \R^+$ is then 
 \begin{eqnarray}\label{eq:weightedEMDKernel}
 k(x_i, x_j) =  \frac{e^{- (x_i - x_j)^\intercal ( W_{x_i} + W_{x_j} )^{-1} (x_i - x_j) / \sigma^2}}{\sqrt{\det(W_{x_i}+ W_{x_j})}}.
 \end{eqnarray}
The normalization in the denominator is needed to guarantee $k$ is positive semi-definite.

\end{enumerate}

\begin{theorem}\label{thm:posdefKernel}
The kernel $k:X\times X \rightarrow \R^+$ from \eqref{eq:weightedEMDKernel} is positive semi-definite.
\end{theorem}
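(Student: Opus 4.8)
The plan is to recognize $k$ as a positive multiple of an $L^2$ inner product of Gaussian feature maps, from which positive semi-definiteness is immediate. The starting observation is that the formula in \eqref{eq:weightedEMDKernel} is precisely the overlap integral of two multivariate Gaussian densities whose covariances are proportional to $W_{x_i}$ and $W_{x_j}$. First I would record that each $W_{x_i}$ is symmetric positive definite: by construction it is diagonal with entries $(w_{x_i}(y)+\lambda)^{-1}>0$, so the Gaussians introduced below are well defined and the inverses and determinants all make sense.

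The first real step is to define the feature map. For each point $x_i$, set $\Sigma_i = \tfrac{\sigma^2}{2} W_{x_i}$ and let
$$g_i(u) = \frac{1}{(2\pi)^{m/2}\det(\Sigma_i)^{1/2}}\exp\left(-\tfrac{1}{2}(u-x_i)^\intercal \Sigma_i^{-1}(u-x_i)\right) \in L^2(\R^m).$$
This assigns to every data point $x_i$ an element $g_i$ of the Hilbert space $L^2(\R^m)$. The second step is to evaluate the overlap integral, i.e. to establish the standard identity
$$\int_{\R^m} g_i(u)\, g_j(u)\, du = \frac{1}{(2\pi)^{m/2}\det(\Sigma_i+\Sigma_j)^{1/2}}\exp\left(-\tfrac{1}{2}(x_i-x_j)^\intercal(\Sigma_i+\Sigma_j)^{-1}(x_i-x_j)\right).$$
The verification is by completing the square in $u$: the two quadratic forms combine into $\tfrac12(u-\mu)^\intercal(\Sigma_i^{-1}+\Sigma_j^{-1})(u-\mu)$ plus a term independent of $u$, the Gaussian integral over $u$ contributes a factor $\det(\Sigma_i^{-1}+\Sigma_j^{-1})^{-1/2}$, and the determinant identity $\det(\Sigma_i)\det(\Sigma_j)\det(\Sigma_i^{-1}+\Sigma_j^{-1})=\det(\Sigma_i+\Sigma_j)$, which follows from $\Sigma_i^{-1}+\Sigma_j^{-1}=\Sigma_i^{-1}(\Sigma_i+\Sigma_j)\Sigma_j^{-1}$, collapses the constants to the stated form.

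The third step is to match constants. Substituting $\Sigma_i+\Sigma_j=\tfrac{\sigma^2}{2}(W_{x_i}+W_{x_j})$ into the overlap formula, the exponent becomes exactly $-(x_i-x_j)^\intercal(W_{x_i}+W_{x_j})^{-1}(x_i-x_j)/\sigma^2$, and pulling the scalar $(\sigma^2/2)^{m/2}$ out of the determinant yields
$$k(x_i,x_j) = (\pi\sigma^2)^{m/2}\,\langle g_i, g_j\rangle_{L^2(\R^m)}.$$
Positive semi-definiteness is then automatic: for any $c_1,\dots,c_n\in\R$,
$$\sum_{i,j} c_i c_j\, k(x_i,x_j) = (\pi\sigma^2)^{m/2}\,\Big\| \sum_i c_i g_i \Big\|_{L^2}^2 \ge 0.$$

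I expect the main obstacle to be purely bookkeeping rather than conceptual: carefully verifying the overlap identity, in particular the determinant identity that lets the $\det(\Sigma_i+\Sigma_j)^{1/2}$ in the denominator reproduce the paper's normalization, and checking that the $\sigma^2$ and factor-of-two scalings line up so that the exponent comes out precisely as written. Once the feature-map interpretation is established the conclusion is forced, which also explains \emph{why} the denominator normalization in \eqref{eq:weightedEMDKernel} is essential: without the $\det(W_{x_i}+W_{x_j})^{1/2}$ factor the kernel could not be written as an inner product and might fail to be positive semi-definite.
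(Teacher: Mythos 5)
Your proposal is correct and follows essentially the same route as the paper: the paper likewise writes $k(x,y)$ as an overlap integral $\int g_x(z)g_y(z)\,dz$ of Gaussians with covariances built from $W_x$ and $W_y$ (invoking the Gaussian product identities to match the normalization $\sqrt{\det(W_x+W_y)}$ up to a positive constant), and then concludes positive semi-definiteness because the quadratic form $\int\!\!\int w(x)w(y)k(x,y)\,dx\,dy$ collapses to $\int \bigl(\int w(x)g_x(z)\,dx\bigr)^2 dz \ge 0$. Your feature-map phrasing in $L^2(\R^m)$ is the same argument, with the added merit of tracking the constant $(\pi\sigma^2)^{m/2}$ explicitly where the paper leaves it as an unspecified $C$.
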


The proof of Theorem \ref{thm:posdefKernel} is in Appendix \ref{appendix:posdefKernel}.

Because $k$ is positive semi-definite, we can compute the embedding of the data $\Phi_t(X)$, and induce a new diffusion metric on the data,
\begin{eqnarray*}
d_{F}^t(x,y) = \|\Phi_t(x) - \Phi_t(y)\|_2.
\end{eqnarray*}
This, in turn, allows us to define $\widehat{f}: \{x_i\} \rightarrow \R$ as an estimate to $f(x_i)$, where
\begin{eqnarray*}
\widehat{f}(x) = F(\mathscr{N}_{d_{F}^t}^\epsilon(x)), &\textnormal{ where } \mathscr{N}_{d_{F}^t}^\epsilon(x) = \{z\in\{x_i\} : \|\Phi_t(x) - \Phi_t(z)\|_2<\epsilon\}.
\end{eqnarray*}

\subsection{Interpolation and Leave Out Validation}\label{leaveOutValidation}
The metric $d_{F}^t$ and function estimate $\widehat{f}$ can easily be extended to new points $z \not\in \{x_i\}$ not in the training data.  This is done by building an asymmetric affinity matrix to the training data, which can be thought of as a reference set.  The approach is an application of \cite{kushnir2012anisotropic}, which we briefly outline here.

Let $X$ be training data, and $F$ defined on subsets of $X$.  Let $Z$ be testing points on which $F$ is not defined a priori.  
Define $k: \left(X\cup Z\right) \times X \rightarrow \R^+$ to be
\begin{eqnarray*}
k(z,x) =  \frac{e^{- (z - x)^\intercal W_{x}^{-1} (z - x) / \sigma^2}}{\sqrt{\det(W_{x})}}, & z\in Z\cup X, \textnormal{ } x\in X.
\end{eqnarray*}
With the normalization matrices $(D_1)_{ii} = \sum_j k(z_i, x_j)$ and $(D_2)_{ii} = \sum_j k(z_j, x_i)$, we set
\begin{eqnarray*}
A = D_1^{-1/2} k D_2^{-1/2},
\end{eqnarray*}
and take the eigendecomposition of the small matrix $A^* A= \Psi \Sigma \Psi^*$.  This gives an embedding of the reference points $X$.  Then eigendecompsition of the entire set of points $X\cup Z$ is estimated by
$\Phi = A \Psi$.  The details of the extension algorithm can be found in Algorithm \ref{leaveOutAlgorithm}.

\begin{algorithm}[h!]
\caption{Nearest Neighbor Function Estimation}\label{leaveOutAlgorithm}
\begin{tabular}{ll}
{\bf Required:} & Training points $\{x_i\}_{i=1}^n \in\R^{m}$ \\
 & Function $F$ to be evaluated on $E\subset \{x_i\}$ \\
 & Testing points $\{z_i\}_{i=1}^N \in \R^m$\\
 
{\bf Result:} & Pointwise function estimate on testing points $\widehat{f}(z_i)$\\
\end{tabular}
\begin{eqnarray*}
\end{eqnarray*}

\begin{enumerate}

\item Build stable function weighted diffusion embedding of the features $\Phi_t(Y)$ and a hierarchical tree $\mathscr{Y}$ via Algorithm \ref{mainAlgorithm} using the training points $\{x_i\}_{i=1}^n $

\item Build a function weighted diffusion embedding $\widehat{\Phi_t}(X\cup Z)$ using the weighted embedding via reference set algorithm (see Section \ref{leaveOutValidation})

\item For each testing point $z$, define the pointwise training neighborhood $\mathscr{N}_{d_{F}^t}^\epsilon(z) = \{x\in\{x_i\} : \|\Phi_t(x) - \Phi_t(z)\|_2<\epsilon\}$ and function estimate 
\begin{eqnarray*}
\widehat{f}(z) = F(\mathscr{N}_{d_{F}^t}^\epsilon(z))
\end{eqnarray*}

\end{enumerate}
\end{algorithm}

Algorithm \ref{leaveOutAlgorithm} can be thought of as generating an optimized metric for a k-nearest neighbor search.  There could be better mechanisms of classification and regression for predicting $\widehat{f}$, ranging from support vector machines in \cite{Boser1992}, to various types of linear regression, such as Elastic Net from  \cite{elasticNet2005}.  These choices are application and function specific, which is why we remain with a simple nearest neighbor interpolation.  The key is that the metric $d_{F}^t$ agrees with the intrinsic geometry of the data.

It is also important to run leave out validation of the algorithm to insure no overfitting of the data.  As the algorithm is semi-supervised and weights variables according to their discriminatory power, it is possible to give high weight to features which are spuriously correlated with the function.  This makes it crucial to run N-fold cross validation of the data to ensure that the predicted $\widehat{f}(z)$ are good estimates to the true function.

\section{Localized Hazard Ratio Estimation}\label{locHR}

The aim of the construction of the metric $d_{F}^t$ from Section \ref{weightedTree} is to construct a metric that differentiates level sets of the treatment risk.  This implies that, in a neighborhood $\mathscr{N}_\epsilon(x)$, we can assume that $|(Y_1(Z) - Y_0(Z)) - (Y_1(X) - Y_0(X))|<\delta$.  This implies we can estimate the local treatment effect $\widehat{\alpha}$ from the model
\begin{eqnarray}
	\lambda(t|Z) &=& \lambda_0(t) e^{T_Z Y_1(Z) + (1-T_Z) Y_0(Z)}\\
	 & \approx &  \lambda_0(t) e^{T_Z \alpha + Y_0(Z)},\label{eq:localApprox}
\end{eqnarray}
for $Z\in\mathscr{N}_\epsilon(X)$.  This approximation is because $\alpha = Y_1(Z) - Y_0(Z)$ does not vary more than $\delta$ in $\mathscr{N}_\epsilon(X)$.

Now assume we fit the false model to simply estimate treatment effectiveness, which is necessary given no knowledge of the model assumptions for $Y_0(Z)$ locally.  What we can assume is that, given $Z\in\mathscr{N}_\epsilon(X)$, $Y_0(Z)$ cannot vary too much within a small neighborhood.  

There are two regimes in which we study this question of estimating $\alpha$.  In either situation, we have a censoring model $C\sim q$ and observe an outcome $W$ only if $W\le C$.  When that's the case, we denote $D=1$, with $D=0$ otherwise.

If we only observe $D$ (i.e. whether or not the patient had an outcome before leaving the trial), then $\alpha$ is estimated through method of moments between the two treatment groups.  If we observe the actual outcome time $t$ along with $D$, $\alpha$ is estimated through partial likelihood maximization.  We provide results for both, with the stronger and more concrete results coming for observation only of the binary outcome variable $D$.

In both settings, we assume that the patient risk over time is dictated by \eqref{eq:localApprox}.  We also assume that there is some censoring model $C$ for each patient, which is a random variable independent of \eqref{eq:localApprox} that dictates when a patient decides to leave the trial, if they are still alive.

\subsection{Binary Observation of Outcome with Censoring}

In this scenario, we only observe $D$ for each patient.  This means we know whether they had an outcome before they left the trial, but not the time at which the outcome occurred.

To create an estimate $\widehat{\alpha}$, we use a method of moments approach.  That is, within the neighborhood $\mathscr{N}_\epsilon(X)$, we look at the empirical estimate
\begin{equation}\label{eq:methodMoments}
\footnotesize
\frac{1}{|\mathscr{N}_\epsilon(X)\cap \{T_X=1\}|}\left(\sum_{z\in\mathscr{N}_\epsilon(X) \cap \{T_X=1\}} D_X\right) - \frac{1}{|\mathscr{N}_\epsilon(X)\cap \{T_X=0\}|}\left(\sum_{z\in\mathscr{N}_\epsilon(X) \cap \{T_X=0\}} D_X\right).
\end{equation}

\normalsize
We borrow and modify results from \cite{gail1984biased} about small variation of misspecified models. 
For notation, let 
\begin{eqnarray*}
\Pi(X,T_X) = P(D = 1 | X,T_X), & & \Pi(T_X) = P(D = 1 | T_X) = \E_X(\Pi(X,T_X)).
\end{eqnarray*}

\begin{theorem}\label{thm:localHR}
Let the survival model satisfy \eqref{eq:localApprox}, and the $P(T_X = 1) = p$ for $0<p<1$.  Assume we use a method of moments estimation of the misspecified model
\begin{eqnarray*}
	\lambda(t|X) &=& \lambda_0(t) e^{T_X \alpha}.
\end{eqnarray*}
Let us further assume we only observe an indicator of outcome $D$.

Then the method of moments estimate converges at a rate of $ N^{-1/2}  C_{p,\alpha,Y_0,\lambda_0,q}$ to $\alpha^*$ for some finite constant $C$ that depends on $p$, $\alpha$, $\lambda_0$, the non-treated risk model, and the censoring model.  The estimate converges to $\alpha^*$, which satisfies
\begin{eqnarray*}
\alpha^*= \alpha + \log\left(\frac{\Pi(1) \E_X\left[\Pi(X,0) e^{ - Y_0(X)} \right]}{\Pi(0) \E_X\left[\Pi(X,1) e^{- Y_0(X)} \right]}\right).
\end{eqnarray*}

Moreover, if $Y_0(X)$ is well approximated locally by its first order Taylor expansion $Y_0(Z) = \mu + \beta X + \OO{\Sigma_X}$ for small $\beta$, then we can reduce the $\log$ term to further show
\begin{eqnarray*}
	|\alpha^* - \alpha| < \frac{1}{2} \beta'\Sigma_X \beta \cdot |R(\alpha) - R(-\alpha) |
\end{eqnarray*}
where
\begin{eqnarray*}
	R(x) = 2 \phi'(x) / \phi(x), &\textnormal{for}& \phi(x) = \E(D | x),
\end{eqnarray*}
is a constant that depends only on the size of $\alpha$ and the censoring model.  Note also that this implies $\alpha>0 \implies \alpha^*>0$, $\alpha<0 \implies \alpha^*<0$, and $\alpha=0 \implies \alpha^*=0$.

\end{theorem}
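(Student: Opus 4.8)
The plan is to first reduce the problem to a scalar link function, then treat the method of moments as a smooth functional of two binomial means, and finally extract the closed form for $\alpha^*$ together with its Taylor refinement. \textbf{Step 1 (reduction to a link).} I would combine the local survival model \eqref{eq:localApprox} with the independent censoring model $C\sim q$. Since $D=1$ exactly when $W\le C$, and $W\mid(X,T_X)$ has survival $\exp(-e^{T_X\alpha+Y_0(X)}\Lambda_0(w))$ with $\Lambda_0(w)=\int_0^w\lambda_0$, conditioning on $C$ and using independence gives
\begin{eqnarray*}
\Pi(X,T_X) = \phi\left(T_X\alpha + Y_0(X)\right), & \phi(\eta) := 1 - \E_{C\sim q}\left[\exp\left(-e^{\eta}\Lambda_0(C)\right)\right].
\end{eqnarray*}
I would record that $\phi$ is smooth, strictly increasing, valued in $(0,1)$, with $\phi(\eta)\approx e^{\eta}\E_C[\Lambda_0(C)]$ in the rare-event regime, and that $R(x)=2\phi'(x)/\phi(x)$ is precisely the curvature quantity appearing in the theorem; all dependence on $\lambda_0$ and $q$ is now packaged into $\phi$.

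\textbf{Step 2 (consistency and the $N^{-1/2}$ rate).} The estimator \eqref{eq:methodMoments} is the difference of the two arm-wise sample proportions $\widehat\Pi(1)-\widehat\Pi(0)$, and the method of moments inverts the pair $(\widehat\Pi(0),\widehat\Pi(1))$ through the misspecified model map to produce $\widehat\alpha$. Within the neighborhood I treat \eqref{eq:localApprox} as the data-generating law, so the summands are i.i.d.\ Bernoulli conditional on the arm. Writing $N_1\approx pN$ and $N_0\approx(1-p)N$, the CLT gives $\sqrt{N_j}(\widehat\Pi(j)-\Pi(j))\Rightarrow\mathcal N(0,\Pi(j)(1-\Pi(j)))$, and the delta method through the smooth inverse link transports this to $\sqrt N(\widehat\alpha-\alpha^*)\Rightarrow\mathcal N(0,\sigma^2)$. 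Collecting the binomial variances, the factors $p,1-p$, and the Jacobian of the inverse link (which depends on $\phi$, hence on $\lambda_0,q$, and on $\alpha,Y_0$) yields the finite constant $C_{p,\alpha,Y_0,\lambda_0,q}=\sqrt{\sigma^2}$, establishing convergence at rate $N^{-1/2}$.

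\textbf{Step 3 (closed form for $\alpha^*$).} This is the algebraic core, and the step I expect to be the main obstacle. I would write the population method-of-moments equations for the misspecified pair $(\alpha^*,\widetilde\Lambda_0)$: one calibration equation in the control arm fixing the effective baseline, and one in the treatment arm. Solving the control equation for the effective baseline scale and substituting into the treatment equation eliminates the functional nuisance $\widetilde\Lambda_0$. The weights $e^{-Y_0(X)}$ arise as the first-order inverse-link factor mapping an observed arm-specific event probability $\Pi(X,T_X)$ back onto the common baseline scale, since $\Pi(X,T_X)e^{-Y_0(X)}\approx e^{T_X\alpha}\E_C[\Lambda_0(C)]$ to leading order. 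Carrying the elimination through gives
\begin{eqnarray*}
e^{\alpha^*-\alpha} = \frac{\Pi(1)\,\E_X\left[\Pi(X,0)e^{-Y_0(X)}\right]}{\Pi(0)\,\E_X\left[\Pi(X,1)e^{-Y_0(X)}\right]},
\end{eqnarray*}
which is the claimed identity after taking logarithms. As a consistency check, the rare-event linearization $\Pi(X,T_X)\approx e^{T_X\alpha+Y_0(X)}\E_C[\Lambda_0(C)]$ collapses the ratio to $1$, recovering collapsibility $\alpha^*=\alpha$, so that any bias is a pure nonlinearity effect.

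\textbf{Step 4 (refined bound and sign).} For the quadratic bound I substitute $Y_0(X)=\mu+\beta X+\OO{\Sigma_X}$ and expand the log term in the centered fluctuation $\delta:=Y_0(X)-\E_X Y_0(X)=\beta(X-\E_X X)$, which has mean $0$ and $\E_X[\delta^2]\approx\beta'\Sigma_X\beta$. Writing each expectation as an average of $\phi(T_X\alpha+\bar Y_0+\delta)$ against the $e^{-\delta}$ weights and Taylor expanding $\phi$ and $e^{-\delta}$ to second order, the first-order terms cancel (consistent with Step 3's collapsibility), and the surviving $O(\delta^2)$ contribution assembles into $\tfrac12\beta'\Sigma_X\beta$ times the difference of curvature functionals $R(\alpha)-R(-\alpha)$, giving
\begin{eqnarray*}
|\alpha^*-\alpha| < \tfrac12\,\beta'\Sigma_X\beta\,\left|R(\alpha)-R(-\alpha)\right|.
\end{eqnarray*}
Sign preservation I would read off the exact identity rather than the bound: at $\alpha=0$ one has $\Pi(X,1)=\Pi(X,0)$ pointwise, so the ratio is exactly $1$ and $\alpha^*=0$; for $\alpha\neq0$, monotonicity and concavity of $\phi$ force the log term to attenuate toward the null with magnitude below $|\alpha|$ via a Jensen/correlation argument on the $e^{-Y_0}$-weighted expectations, so $\alpha^*$ keeps the sign of $\alpha$. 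The delicate points I anticipate are pinning down exactly which moment equations define the method of moments so that the $e^{-Y_0}$ weights appear with the correct normalization in Step 3, and verifying the first-order cancellation cleanly so that only the $R(\alpha)-R(-\alpha)$ term remains in Step 4.
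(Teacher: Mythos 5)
Your proposal takes essentially the same route as the paper's proof: the $N^{-1/2}$ rate is obtained by the central limit theorem (with the delta method) applied to the arm-wise Bernoulli proportions in \eqref{eq:methodMoments}, and $\alpha^*$ is characterized by the population method-of-moments equations of Gail et al.\ under the false model, eliminating the nuisance baseline and using the identity $\Pi(X,T_X)=e^{\alpha T_X+Y_0(X)}\,\E\left[\int_0^{T_0}\lambda_0(t)\,dt \,\middle|\, X,T_X\right]$ to produce exactly the $e^{-Y_0}$-weighted ratio that appears in the paper's rearranged pair of equations. In fact you go slightly further than the paper's appendix, which stops at ``solving these equations for $\alpha^*$'' and never writes out the Taylor-expansion bound: your Step 4 (first-order cancellation, second-order terms assembling into $\tfrac12\beta'\Sigma_X\beta$ times a difference of $2\phi'/\phi$ values) is the natural completion and reproduces the theorem's $R$, with the one detail to pin down being that your expansion naturally yields the curvature difference at the two arms' linear predictors, and matching the theorem's symmetric form $R(\alpha)-R(-\alpha)$ requires the symmetric $e^{\pm\alpha/2}$ split of the treatment term that the paper adopts in its appendix equations.
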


The proof of Theorem \ref{thm:localHR} is in Appendix \ref{appendix:localHRProof}.

\subsection{Continuous Time to Outcome with Censoring}

In this scenario, we observe $D$ for each patient, as well as the actual outcome and/or censoring time $t$.  This means we know whether they had an outcome before they left the trial, as well as the time that the outcome occurred.  That time is an additional source of information, given that we can now attempt to partially order all patients that had an outcome, and ensure that people who were censored at time $t$ are estimated to live at least that long (if not longer).

To create an estimate $\widehat{\alpha}$, we use partial likelihood maximization.  That is, we construct the log likelihood function
\begin{eqnarray}\label{eq:partialLikelihood}
l(\eta) = \sum_Z \left( \eta(Z) - \log \sum_{Y\in \mathscr{R}_Z} e^{\eta(Y)}\right),
\end{eqnarray}
where $\mathscr{R}_Z = \{Y : t_Y>t_Z\}$, and $\eta(Z)$ is the argument of the exponential evaluated for patient $Z$.  In the case of the misspecified model, the argument used is $\eta(Z) = \alpha T_Z$, and the true model is $\eta^*(Z) = \alpha T_Z + Y_0(Z)$.

We borrow results from \cite{gail1984biased} about small variation of misspecified models, and \cite{tsiatis1981large} and \cite{struthers1986misspecified} about convergence rates. For notation, let
\begin{eqnarray*}
	H(y|X,T_X) = P(t>y | X,T_X).
\end{eqnarray*}

\begin{theorem}[\cite{gail1984biased}; restated]\label{thm:localHRpartial}
Let the survival model satisfy \eqref{eq:localApprox}, and the $P(T_X = 1) = p$ for $0<p<1$.  Assume we use a partial likelihood estimation of the misspecified model
\begin{eqnarray}\label{eq:misspecifiedModel}
	\lambda(t|X) &=& \lambda_0(t) e^{T_X \alpha}.
\end{eqnarray}
Let us further assume we observe both an indicator of the outcome $D$ and an outcome/censoring time $t$.  Also, let $T_0$ be the final time at which patients are observed (i.e. our censoring model censors anyone that lives past time $T_0$).

Then the partial likelihood estimate converges at a rate of $ N^{-1/2}  C_{p,\alpha,Y_0,\lambda_0,q}$ to $\alpha^*$ for some finite constant $C$ that depends on $p$, $\alpha$, $\lambda_0$, the non-treated risk model, and the censoring model, as shown by \cite{tsiatis1981large} and \cite{struthers1986misspecified}.  The estimate converges to $\alpha^*$, which satisfies
\begin{align*}
	\int_0^{T_0} &\E\left[H(y|X,T_X) T_X e^{\alpha T_X + Y_0(X)} \right]]\lambda_0(y) dy \\
	& = \int_0^{T_0} \frac{\E\left[H(y|X,T_X) T_X e^{\alpha^* T_X} \right] \E\left[H(y|X,T_X) e^{\alpha T_X + Y_0(X)} \right] }{\E\left[H(y|X,T_X) e^{\alpha^* T_X} \right]} \lambda_0(y) dy,
\end{align*}
as shown by \cite{gail1984biased}.

Moreover, if $Y_0(X)$ is well approximated locally by its first order Taylor expansion $Y_0(Z) = \mu + \beta X + \OO{(\Sigma_X)}$ for small $\beta$, then we can reduce the log term further to show
\begin{eqnarray*}
\alpha^* - \alpha \approxeq \frac{1}{2}(\beta' \Sigma_X \beta) \frac{\E(l^{(2)}) \E(T_X l^{(3)}) - \E(T_X l^{(2)}) \E( l^{(3)})}{[\E(l^{(2)})]^2 - [\E(T_X l^{(2)})]^2},
\end{eqnarray*}
where $l^{(k)}$ is the $k^{th}$ derivative of the log likelihood \eqref{eq:partialLikelihood} with respect to $\eta$.

\end{theorem}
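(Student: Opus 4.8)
The first two displayed conclusions are not new, and I would treat them as quotations rather than as things to be reproved. The $N^{-1/2}$ rate together with the asymptotic normality of the maximizer of a \emph{misspecified} Cox partial likelihood is exactly the content of \cite{tsiatis1981large} and \cite{struthers1986misspecified}, while the implicit fixed-point identity characterizing the limiting value $\alpha^*$ is the population score equation of \cite{gail1984biased}: under a working model the partial-likelihood maximizer converges to the value equating, at each failure time, the expected retained covariate $T_X$ under the truth with its expectation under the working risk weights. So I would invoke those results verbatim for the first two claims and devote the real work to the \emph{moreover} statement, which is the omitted-covariate bias expansion induced by our local linearization.

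The plan for the \emph{moreover} part is first to recognize it as a classical omitted-covariate problem. Substituting the Taylor model $Y_0(Z)=\mu+\beta X$ into Gail's identity, I would recenter $X$ so that $\E_X[X]=0$, absorbing $\mu$ into $\lambda_0$; since $e^{\mu}$ factors out of every expectation on both sides it cancels identically, leaving $Y_0(X)=\beta X=:V$, a scalar covariate with $\operatorname{Var}(V)=\beta'\Sigma_X\beta$. The situation is then precisely that of fitting the working model $\eta=a\,T_X$ to data generated by the two-covariate truth $\eta^{*}=\alpha T_X+V$: we omit $V$ and ask for the bias it induces in the retained coefficient as $\operatorname{Var}(V)\to 0$. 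In Gail's identity $V$ enters only through the true-model weights $e^{\alpha T_X+V}$, which is the ``log term'' to be reduced.

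The expansion itself I would carry out by writing the population working score as $U(a;\beta)=0$, expanding $e^{V}=1+V+\tfrac12 V^{2}+\OO{V^{3}}$ inside every expectation, and collecting coefficients $U_0,U_1,U_2$ of $1,\ \E_X[V],\ \tfrac12\E_X[V^2]$. At $\beta=0$ the covariate $V$ is constant, is absorbed into the baseline, and the working model is correctly specified, so $U_0(\alpha)=0$, i.e. $\alpha^*=\alpha$. The linear coefficient is killed by centering, $U_1(\alpha)\,\E_X[V]=0$, so the first nonvanishing correction is quadratic, and solving $U_0'(\alpha)(\alpha^*-\alpha)+\tfrac12\E_X[V^2]\,U_2(\alpha)\approx 0$ by the implicit function theorem gives
\begin{eqnarray*}
\alpha^*-\alpha \;\approxeq\; -\tfrac12\,(\beta'\Sigma_X\beta)\,\frac{U_2(\alpha)}{U_0'(\alpha)},
\end{eqnarray*}
which already reproduces the claimed prefactor $\tfrac12\,\beta'\Sigma_X\beta$ and, in particular, forces $\operatorname{sign}(\alpha^*-\alpha)$ to be controlled so that $\alpha=0\Rightarrow\alpha^*=0$ and the sign of $\alpha$ is preserved.

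The remaining and, I expect, hardest step is to show that $U_2(\alpha)$ and $U_0'(\alpha)$ reorganize into the stated moment ratio. Each risk-set term $\log\sum_{Y\in\mathscr{R}_Z}e^{\eta(Y)}$ is a log-partition function, so its successive $\eta$-derivatives are the conditional moments of the covariate over the risk set; these are exactly the quantities written $l^{(2)}$ and $l^{(3)}$, and weighting by $T_X$ or not yields the four expectations $\E(l^{(2)})$, $\E(T_X l^{(2)})$, $\E(l^{(3)})$, $\E(T_X l^{(3)})$, while jointly solving the two-dimensional score structure for the pair $(T_X,V)$ produces the determinant denominator $[\E(l^{(2)})]^2-[\E(T_X l^{(2)})]^2$ (using $T_X^2=T_X$) and the cross-difference numerator. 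The main obstacle is this bookkeeping: $U$ is a time-integrated ratio weighted by $H(y\mid X,T_X)\lambda_0(y)$, so one must justify interchanging the $\beta$-differentiation with the integral over failure times and verify that the time-dependent risk weights collapse into the time-free expectations of $l^{(k)}$ asserted in the statement, keeping careful track of which expectations sit inside versus outside the risk-set ratio. The $\approxeq$ in the conclusion records that this is the leading term, with remainder $\OO{(\beta'\Sigma_X\beta)^2}$.
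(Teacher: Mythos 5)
Your proposal is correct, and where it overlaps with the paper's proof it takes the identical route: the paper's entire proof is a single paragraph observing that everything except the convergence rate is restated from \cite{gail1984biased}, with the $N^{-1/2}$ rate imported from \cite{tsiatis1981large} for the correctly specified model and extended to the misspecified model \eqref{eq:misspecifiedModel} by \cite{struthers1986misspecified}, exactly as you quote them. The genuine difference is that you go further than the paper: the \emph{moreover} expansion is also merely attributed to \cite{gail1984biased} in the paper, whereas you sketch its derivation. Your sketch is a faithful reconstruction of Gail et al.'s omitted-covariate argument --- center $V=\beta X$, absorb $\mu$ into $\lambda_0$, expand the population score to second order in $V$, kill the linear term using randomization (here $P(T_X=1)=p$ independent of $X$, so $T_X$ and $V$ are independent), and solve by the implicit function theorem to get the prefactor $\tfrac12\,\beta'\Sigma_X\beta$ times a ratio whose determinant denominator $[\E(l^{(2)})]^2-[\E(T_X l^{(2)})]^2$ reflects the two-dimensional score structure with $T_X^2=T_X$. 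What your route buys is an actual verification of the formula the paper takes on faith; what the paper's route buys is brevity and an honest labeling of the theorem as a restatement.

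One caution on your sketch: your claim that $V$ enters Gail's identity \emph{only} through the true-model weights $e^{\alpha T_X+V}$ is not quite right. The at-risk weight $H(y|X,T_X)=P(t>y|X,T_X)$ also depends on $V$, since the true survival distribution is driven by $\lambda_0(t)e^{\alpha T_X+Y_0(X)}$; a complete second-order expansion must therefore include the $V$-derivatives of $H$ as well, and these contribute to both $U_1$ and $U_2$ (the linear contribution still vanishes after centering, by the same independence of $T_X$ and $V$, but the quadratic bookkeeping changes). Since you expand ``inside every expectation'' and defer the reorganization into the $\E(l^{(k)})$ moments to the final step you flag, this is a repairable slip in the sketch rather than a failure of the approach --- and given that the paper itself stops at the citation for this part, your proposal is, if anything, more complete than the proof it is being compared against.
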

\begin{proof}
The only part of Theorem \ref{thm:localHRpartial} that is not restated from \cite{gail1984biased} is the convergence rate.  \cite{tsiatis1981large} shows that, given a correct model for $Y_0(X)$ as in \eqref{eq:localApprox}, the partial likelihood maximization estimate $\widehat{\alpha}$ satisfies
\begin{eqnarray*}
	N^{1/2} (\widehat{\alpha} - \alpha) \rightarrow \mathscr{N}(0,\Sigma_{p,\alpha,Y_0,\lambda_0,q}),
\end{eqnarray*}
where convergence is in distribution.  Furthermore, \cite{struthers1986misspecified} shows that the same rate of convergence applies to a misspecified model \eqref{eq:misspecifiedModel}, with the exception that $\widehat{\alpha}$ converges to an a priori unknown value $\alpha^*$.  The rest of the proof focuses on characterizing $\alpha^*$ in terms of known quantities, as done by \cite{gail1984biased}.
\end{proof}

 \section{Examples}\label{applications}

 \subsection{Synthetic Randomized Drug Trial}\label{drugTrial}
 
 We create a model of synthetic patients in a drug trial.  The patient baseline model consists of 9 dimensions of correlated information, where
\begin{eqnarray*}
x_k\in[0,1], & x_{3i +1}^2 + x_{3i +2}^2 + x_{3i +3}^2 = 1, & \textnormal{ for } i\in\{0,1,2\}.
\end{eqnarray*}
We note that this model choice is arbitrary, and was solely chosen to model a dependence between patient features.  The patients are randomly split into treatment A and treatment B.

The baseline hazard function for the patients is a Weibull distribution of the type
\begin{eqnarray*}
P(X<t) =  e^{-\lambda t^k},
\end{eqnarray*}
for $\lambda=2$ and $k=1.2$.  If a patient is in treatment A, their outcome time $t_x$ is sampled from the Weibull distribution.  If a patient is in treatment B, their outcome time $t_x$ is sampled from the Weibull distribution and then adjusted by $t_x \mapsto t_x e^{\beta_x}$.  Any patient is censored if $t_x > T$ for a fixed $T$.

The key behind this model is that $\beta_x$ is patient specific, and depends only on a subset of the patient's baseline information.  Specifically,
\begin{eqnarray*}
\beta_x = f(x_1, x_2, x_3),
\end{eqnarray*}
where $f$ is displayed in Figure \ref{fig:groundTruth}.  Setting $T=2$ and $\sup |e^{\beta_x} | = 3$, about $10.5\%$ of patients have an outcome.

\begin{figure}[!h]
\begin{center}
\includegraphics[width=.4\textwidth]{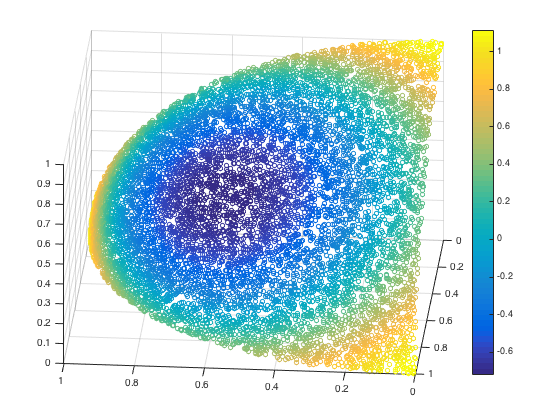}
\end{center}
\caption{}\label{fig:groundTruth}
\end{figure}

A linear Cox proportional hazard model, by definition, is unable to recover the full spread of $\beta_x$.  But beyond that, in this case a linear model fails to even recover the treatment group as a significant factor in risk.  See Table \ref{tab:linearCox} for the regression coefficients.
\begin{table}[!h]
\footnotesize
\begin{tabular}{c|rrrrrrrrrr}
{\bf Variable Name} & Treatment & $x_1$ & $x_2$ & $x_3$ & $x_4$ & $x_5$ & $x_6$ & $x_7$ & $x_8$ & $x_9$ \\
\hline
{\bf Coefficeint} & 0.0597  & -1.8517 &  -1.6500 &  -1.8440 &  -0.1797   & -0.3245  & -0.1619 &  -0.1127  &  0.0566   & 0.0954\\
{\bf p-value} & 0.3155  &  0.0001 &    0.0001  &  0.0001 &   0.3766   & 0.1087&   0.4295 &   0.5826  &  0.7854   & 0.6450
\end{tabular}
\caption{}\label{tab:linearCox}
\end{table}

Our algorithm on the other hand is able to recover both the geometry of the patients and an estimate of the personalized hazard ratio $\beta_x$.  Figure \ref{fig:octentSphereHR} shows the recovered embedding of the patients, and is colored by the estimate of $\beta_x$.
\begin{figure}[!h]
\begin{tabular}{cc}
\includegraphics[width=.4\textwidth]{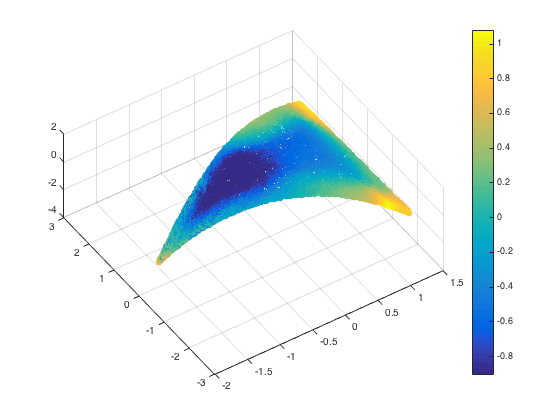} & 
\includegraphics[width=.4\textwidth]{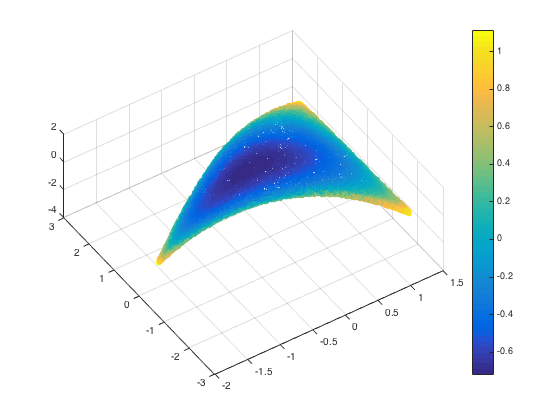} \\
(a) Predicted HR & (b) Ground Truth HR
\end{tabular}
\caption{}\label{fig:octentSphereHR}
\end{figure}
The model works for predictive personalized hazards on new testing data, as well.  We run repeated random sub-sampling validation on the toy data by retaining $80\%$ of the patients for training, and testing on the remaining $20\%$.  This was iterated $100$ times.  The results are shown in Figure \ref{fig:octentSphereValidation}.
\begin{figure}[!h]
\begin{center}
\includegraphics[width=.8\textwidth]{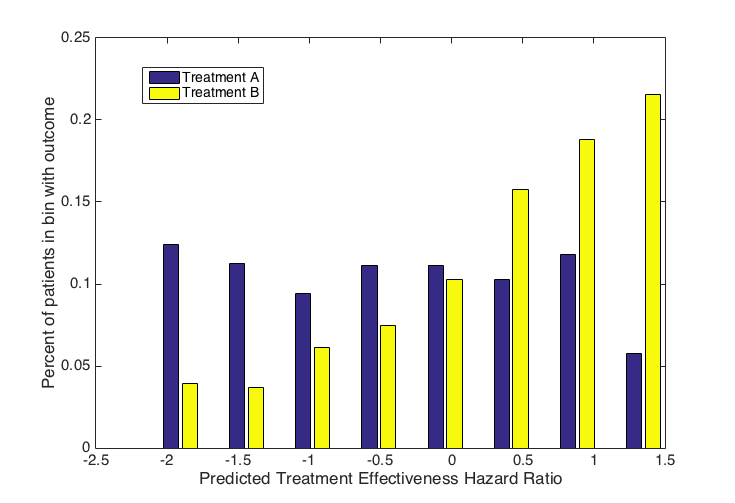} 
\end{center}
\caption{}\label{fig:octentSphereValidation}
\end{figure}



%

\subsection{Models with Treatment Propensity}
Let $X\sim \mathcal{N}(0, \Sigma)$, with $$\Sigma_{i,j} = \begin{cases} 1, & \textnormal{if } i=j \\ 0.5, & \textnormal{if } |i-j|=1\\0, & \textnormal{otherwise}\end{cases}.$$  The baseline hazard function is the same as in Section 
\ref{drugTrial}, and the personal hazard ratio is
\begin{eqnarray*}
h(X) &=& X_1 + 0.5 X_2 + 0.5 X_1 X_2 + X_\alpha X_2.
\end{eqnarray*}
However, unlike in the previous examples, $X_\alpha$ is not randomized across the population.  Instead, 
\begin{eqnarray*}
P(X_\alpha = 1) &=& P(w_x < \gamma_0 + X\gamma),
\end{eqnarray*}
where $w_x \sim \mathcal{N}(0,1)$, $\gamma_0 = 0.5$, and $\gamma = \begin{bmatrix}1& 1& 0& ...& 0\end{bmatrix}$.

\begin{figure}[!h]
\footnotesize
\begin{tabular}{ccc}
\includegraphics[width=.32\textwidth]{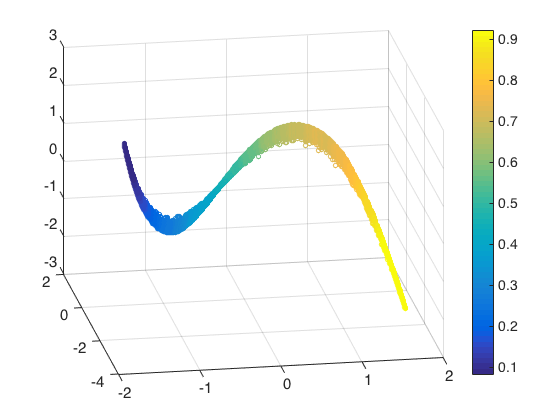} & 
\includegraphics[width=.32\textwidth]{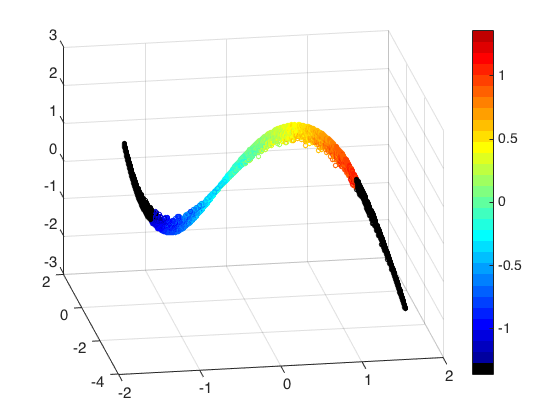} & 
\includegraphics[width=.32\textwidth]{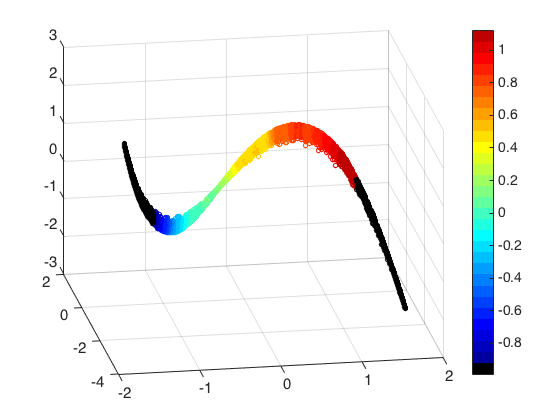} \\
Fraction Treated in Neighborhood & Ground Truth $h(X)$ & Estimated $h(X)$
\end{tabular}
\caption{Black corresponds to points where $>80\%$ of the points were from the same treatment group, and thus removed due to lack of estimate precision.}\label{fig:propOutcome}
\end{figure}
 
 We also consider a random model, in which $\Sigma$ is a random symmetric positive definite matrix with condition number less than 10, and the personal hazard ratio follows the form
 \begin{eqnarray*}
 h(X) = \sum_i \xi_i X_i  + \sum_{i,j} \eta_{i,j} X_i X_j  +  X_\alpha \left(\sum_i \nu_i X_i  + \sum_{i,j} \delta_{i,j}  X_i X_j\right),
 \end{eqnarray*}
 where $\xi_i$ and $\nu_i$ are sparse standard normal random variables which are non-zero with probability $0.5$.  Also, $\eta_{i,j}$ (resp. $\delta_{i,j}$) are standard normal random variables which are non-zero if and only if $\xi_i$ and $\xi_j$ (resp. $\nu_i$ and $\nu_j$) are non-zero.  Also, the probability that a patient is treated is determined by 
 \begin{eqnarray*}
 P(X_\alpha = 1) &=& P(w_x < \gamma_0 + X\gamma).
 \end{eqnarray*}
 
 We run this model across 100 iterations, where we generate $2,000$ patients who's baseline hazard function is drawn from a Weibull distribution as in previous examples.  The patients are then censored such that $\epsilon$ fraction of the patients have an outcome, where $\epsilon$ is a uniform random variable drawn from $[1/3, 1]$.  We calculate the correlation between the predicted personalized treatment effect $\widehat{f}(x)$ and the ground truth treatment effect $\sum_i \nu_i X_i  + \sum_{i,j} \delta_{i,j}  X_i X_j$.  Because of the propensity for treatment, we only estimate $\widehat{f}(x)$ in neighborhoods such that $\le 80\%$ of the patients are in the same treatment group.
  The histogram of correlations is in Figure \ref{fig:prop100Iters}.
 
 \begin{figure}[!h]
\begin{center}
\includegraphics[width=.6\textwidth]{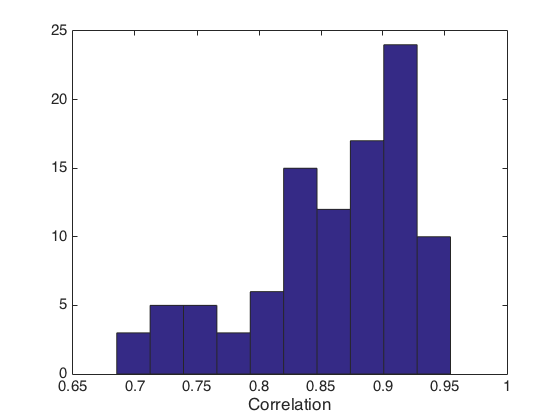} 
\end{center}
\caption{Histogram of Correlations between predicted personalized treatment effect and ground truth across 100 iterations.}\label{fig:prop100Iters}
\end{figure}

 \subsection{Real World Data}
 We examine our algorithm on breast cancer data from the Rotterdam Tumor Bank \cite{}.  The Rotterdam tumor bank dataset contains records for 1,546 patients with node-positive breast cancer, and nearly 90 percent of the patients have an observed outcome.  Because this data has no ground truth, we must use leave out cross-validation to validate the recommendations.  We train the model on a random $80\%$ of the patients, and test on $20\%$, and we iterate this process 100 times.  We then split the testing data into three groups, where $\sigma(f)$ is the standard deviation of treatment recommendations:
 \begin{itemize}
 	\item {\bf Recommended:} People with recommendation $|f(x)| > c\cdot \sigma(f)$ such that $f(x)>0$ and $X_\alpha=0$ or  $f(x)<0$ and $X_\alpha=1$.  These are people whose actions followed the recommendation.
 	\item {\bf Neutral:} People with recommendation $|f(x)|<c\cdot \sigma(f)$.  These are people without a strong recommendation.
 	\item {\bf Anti-Recommended:} People with recommendation $|f(x)| > c\cdot \sigma(f)$ such that $f(x)<0$ and $X_\alpha=0$ or  $f(x)>0$ and $X_\alpha=1$.  These are people whose actions went against the recommendation.
 \end{itemize}

We then plot the survival curves of the Recommended and Anti-Recommended groups in Figure \ref{fig:rotterdamCurves}.  Again, these were all testing samples in order to avoid over-fitting to the outcomes.  The group of testing data patients that followed the recommendations lived significantly longer than those that did not follow the recommendation, with a p-value of $p=0.00025$ for the log-rank test of whether these curves are significantly different.

\begin{figure}
	\footnotesize
	\begin{tabular}{c}
		\includegraphics[width=.7\textwidth]{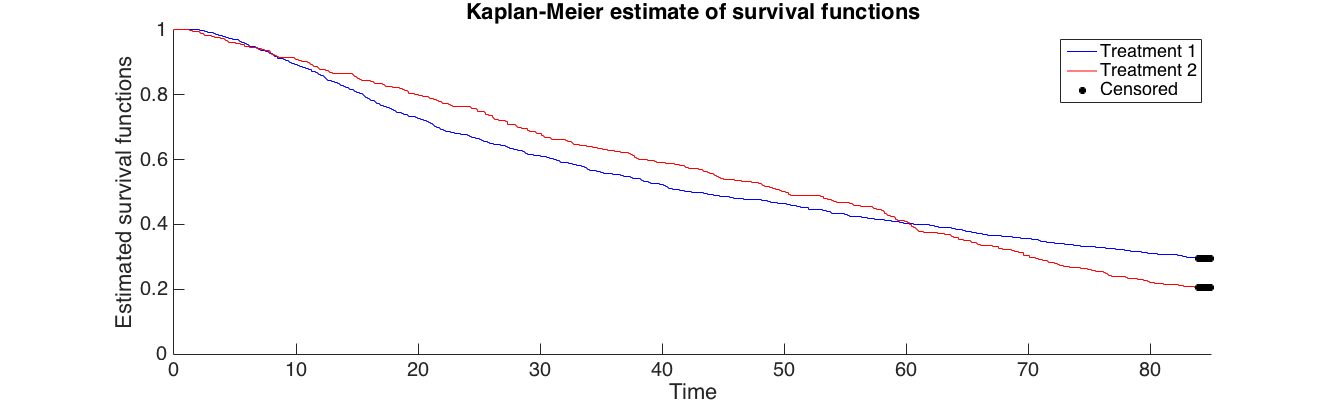} \\
		Treatment survival curves \\
		\includegraphics[width=.7\textwidth]{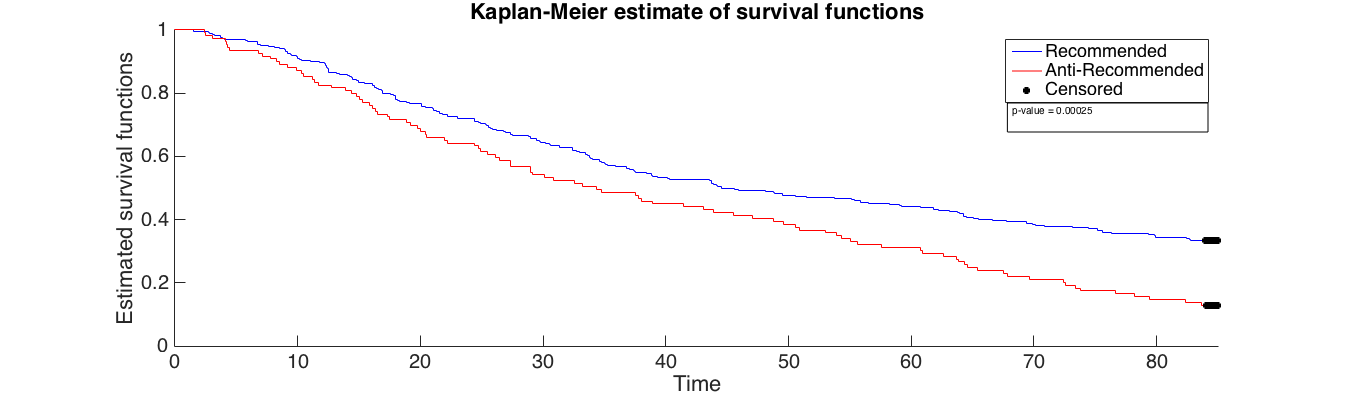} \\
		Recommendation survival curves
	\end{tabular}
\caption{Treatment recommendation on Rotterdam breast cancer testing data for $c=0.5$.}\label{fig:rotterdamCurves}
\end{figure}

 \section{Conclusions and Future Work}
This paper develops a method for building a data dependent metric $\rho: X\times X \rightarrow \R^+$ that is simultaneously learns the level sets of a function $F$.  The method only needs to evaluate $F$ on various half spaces of the data, making it useful when $F$ cannot be evaluated pointwise.  We develop a weighted tree distance to accomplish this, and  develop feature weights at multiple scales and locations in the data.  Once $\rho$ has been discovered, we can do k nearest neighbor prediction for new points added to the data without knowledge of $F$ at the point.

This algorithm was designed with medical applications in mind, specifically building a local cox proportional hazard model for patients in a dataset.  The embedding created by $\rho$ can be used to characterize types of people that are hurt or helped by a drug, and even assign a personalized treatment hazard score to new patients whose outcomes are unknown.


\section*{Acknowledgements}
The author would like to thank Raphy Coifman and Jonathan Bates for many discussions about the problem, and Harlan Krumholz, and Shu-Xia Li for introducing the issues associated with treatment effectiveness and Cox proportional hazard models.  The author is supported by NSF Award No. DMS-1402254.

\bibliographystyle{plain.bst}	
\bibliography{personalizedHRQuestionnaire.bib}		

\appendix

\section{Proof of Theorem \ref{thm:posdefKernel}} \label{appendix:posdefKernel}

	We show $k$ can be expressed as an integral over all ambient space via
	\begin{eqnarray*}
		k(x,y) &=& \int dz \frac{exp\{-(x-z)^\intercal W_x^{-1}(x-z)/\sigma^2\}}{\sqrt{det(W_x)}} \frac{exp\{-(y-z)^\intercal W_y^{-1}(y-z)/\sigma^2\}}{\sqrt{det(W_y)}}.
	\end{eqnarray*} 
	We then use identities 8.1.7 and 8.1.8 from \cite{matrixCookbook}, which show the product of two gaussians gives 
	\begin{eqnarray*}
		\frac{C}{\sqrt{det(W_x + W_y)}} \textnormal{exp}\left[ -(x - y)^{\intercal} (W_x + W_y)^{-1} (x-y) / \sigma^2 \right] \cdot \frac{e^{-m^\intercal W^{-1} m}}{\sqrt{det(W)}}.
	\end{eqnarray*}
	where $m$ and $W$ are combinations of $x$, $y$, and $z$.  Their exact forms are irrelevant, as the right hand term is simply a normalized guassian that can be integrated out with respect to $z$.  Thus, after evaluating the integral, we are left with 
	\begin{eqnarray*}
		k(x, y) = C\cdot \frac{e^{- (x - y)^\intercal ( W_{x} + W_{y} )^{-1} (x - y) / \sigma^2}}{\sqrt{\det(W_{x}+ W_{y})}}.
	\end{eqnarray*}
	
	Now, for any $w(x)$ we can compute
	\footnotesize
	\begin{eqnarray*}
		\int dx dy\textnormal{ }  w(x) w(y) k(x,y) &=& \int dx dy\textnormal{ } w(x) w(y) \int dz \frac{exp\{-(x-z)^\intercal W_x^{-1}(x-z)/\sigma^2\}}{\sqrt{det(W_x)}} \frac{exp\{-(y-z)^\intercal W_y^{-1}(y-z)/\sigma^2\}}{\sqrt{det(W_y)}} \\
		&=& \int dz \left(\int dx \textnormal{ } w(x)  \frac{exp\{-(x-z)^\intercal W_x^{-1}(x-z)/\sigma^2\}}{\sqrt{det(W_x)}} \right) \left( \int dy \textnormal{ } w(y) \frac{exp\{-(y-z)^\intercal W_y^{-1}(y-z)/\sigma^2\}}{\sqrt{det(W_y)}}\right) \\
		&=& \int dz \textnormal{ }  \left(\int dx \textnormal{ } w(x)  \frac{exp\{-(x-z)^\intercal W_x^{-1}(x-z)/\sigma^2\}}{\sqrt{det(W_x)}} \right)^2\\
		&\ge& 0.
	\end{eqnarray*}
	\normalsize

\section{Proof of Theorem \ref{thm:localHR}}\label{appendix:localHRProof}

Let the survival model satisfy 
\begin{eqnarray*} 
	\lambda(t|X) &=& \lambda_0(t) e^{T_X \alpha + Y_0(X)},
\end{eqnarray*}
and the $P(T_X = 1) = p$ for $0<p<1$.  Assume we use likelihood estimation of the misspecified model
\begin{eqnarray*} 
	\lambda(t|X) &=& \lambda_0(t) e^{T_X \alpha}.
\end{eqnarray*}
Also, assume the trial is observed until time $T_0$.

The rate of convergence of for the method of moments calculation \eqref{eq:methodMoments} is a simple application of the central limit theorem, as $D_X$ is a Bernoulli random variable whose probability is a function of the number of samples $P(T_X=1) N$, and the rate of outcomes prior to censoring, which is dictated by $\alpha$, $\lambda_0$, $Y_0$ and the censoring model $q$.  The rest of the proof focuses on characterizing $\alpha^*$ in terms of known quantities.

\cite{gail1984biased} show that the method of moments limit point $\alpha^*$ satisfy
\begin{align*}
\E \left[D - e^{\alpha^* T} \int_0^{T_0}\lambda_0(t) dt \right] = 0 \\
\E \left[T_X \left(D - e^{\alpha^* T}\int_0^{T_0}\lambda_0(t) dt \right)\right] = 0
\end{align*}
under the false model.  Rearranging these equations and noting that $$\Pi(X,T_X) = e^{\alpha T_X + Y_0(X)} \E\left[\int_0^{T_0}\lambda_0(t) dt \middle| X, T_X  \right],$$ we arrive at the equations
	\footnotesize 
\begin{eqnarray*}
	p \Pi(1)  + (1-p) \Pi(0) &=& p e^{\frac{\alpha^*}{2} T_X} \E_X \left[ \Pi(X,1) e^{-\frac{\alpha}{2} T_X - Y_0(X)}  \right] + (1-p) e^{-\frac{\alpha^*}{2} T_X} \E_X \left[ \Pi(X,0) e^{\frac{\alpha}{2} T_X - Y_0(X)}  \right] \\
	p \Pi(1)  - (1-p) \Pi(0) &=& p e^{\frac{\alpha^*}{2} T_X} \E_X \left[ \Pi(X,1) e^{-\frac{\alpha}{2} T_X - Y_0(X)}  \right] - (1-p) e^{-\frac{\alpha^*}{2} T_X} \E_X \left[ \Pi(X,0) e^{\frac{\alpha}{2} T_X - Y_0(X)}  \right].
\end{eqnarray*}
\normalsize
Solving these equations for $\alpha^*$ yields the desired result.

 \end{document}